\renewcommand{\H}{\mathcal{H}}
\newcommand{\G}{\mathcal{G}}
\newcommand{\A}{\mathcal{A}}
\newcommand{\B}{\mathcal{B}}
\newcommand{\Y}{\mathcal{Y}}
\newcommand{\Z}{\mathcal{Z}}
\newcommand{\E}{\mathop{\mathbb{E}}}
\newcommand{\R}{\mathbb{R}}
\renewcommand{\H}{\mathcal{H}}
\renewcommand{\L}{\mathcal{L}}
\author{Michael Rabadi \\
Spotify \\
New York, NY 10011 \\
\texttt{mrabadi@spotify.com} \\
}
\begin{document}

\title{Co-adaptive learning over a countable space}

\author{\name Michael Rabadi \email mrabadi@spotify.com \\
       \addr Spotify, Inc.\\
       New York, NY, USA}

\editor{}

\maketitle

\begin{abstract}%
Co-adaptation is a special form of on-line learning where an algorithm $\A$ must 
assist an unknown algorithm $\B$ to perform some task. This is a general 
framework and has applications in recommendation systems, search, education, and 
much more. Today, the most common use of co-adaptive algorithms is in 
brain-computer interfacing (BCI), where algorithms help patients gain and 
maintain control over prosthetic devices. While previous studies have shown 
strong empirical results \cite{kowalski2013, orsborn2014} or have been studied 
in specific examples \cite{merel2013, merel2015}, there is no general analysis 
of the co-adaptive learning problem. Here we will study the co-adaptive learning 
problem in the online, closed-loop setting. We will prove that, with high 
probability, co-adaptive learning is guaranteed to outperform learning with a 
fixed decoder as long as a particular condition is met.
\end{abstract}

\begin{keywords}
  co-adaptation, online learning, regret minimization
\end{keywords}

\section{Introduction} \label{sec:introduction}

There are many circumstances where an algorithm $\B$ must learn to perform a 
task while operating through some unknown algorithm $\A$. One common example is 
search: a person $\B$ must learn how to provide queries to a search engine $\A$ 
in order to get good search results. In older search engines, $\A$ was a fixed 
function. These days $\A$ tries to help $\B$ find better results using a history 
of $\B$'s queries and selections. While this isn't traditionally thought of as 
co-adaptive learning, experience shows that both the user $\B$ and the search 
engine $\A$ learn to work together to provide $\B$ with good results.

Co-adaptation is more explicitly studied in the context of brain-computer 
interfacing (BCI). BCIs are systems that help paralyzed patients gain control 
over a computer or a prosthetic by decoding their brain activity. In older 
experiments, it has been shown that patients $\B$ were able to learn to perform 
a task through a fixed decoder $\A$, which mapped a patient's brain signals to a 
computer cursor \cite{carmena2003, ganguly2009}. More recently, some 
experimental and theoretical frameworks have been used to examine how decoders 
should adapt in order to improve performance \cite{digiovanna2009, kowalski2013, 
dangi2014}. However, most of these previous studies focused on how $\A$ should 
learn without considering how $\B$ might adapt. A notable exception is 
\cite{merel2015}, where, using some strong assumptions, they were able to show 
that co-adaptive learning was at least as good as learning with a fixed decoder. 
Unfortunately, they had to assume that $\B$'s loss function was known to $\A$ 
and, in particular, was the mean squared loss. Those assumptions may be 
reasonable for some BCI tasks, but do not hold in general.

Here we analyze the co-adaptation learning problem, which is summarized in 
Figure \ref{fig:coadapt}. In this setting, $\B$ has some intention $y_t$ at time 
$t$. $\B$ selects an encoder $g_t \in \G$ with the intention of making the 
input 
$\hat{y}_{t-1}$ close to the intended output. However, $\B$ must work through 
the decoding algorithm $\A$, which selects a hypothesis $h_t \in \H$ with the 
intention of producing a $\hat{y_t}$ close to $y_t$. Previous studies have 
assumed that $\A$ has 
knowledge of $y_t$ during training, or at least can infer $y_t$. However this is 
usually an unreasonable assumption. Instead, we analyze the case where $\A$ does 
not have access to $y_t$. A natural first question: \say{Is co-adaptive 
learning any better than learning with a fixed decoder?} We will show that, 
under some mild assumptions, co-adaptive learning can indeed outperform 
learning with a fixed decoder.

\begin{figure}[h]
    \centering
    \includegraphics[width=0.4\textwidth]{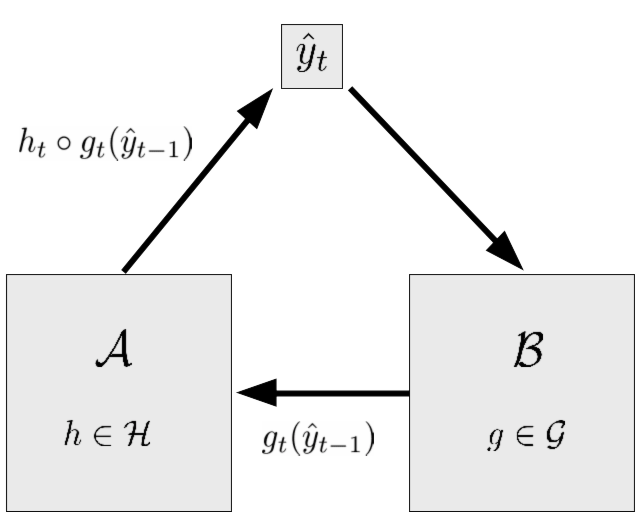}
    \caption{The co-adaptive learning scenario. }
    \label{fig:coadapt}
\end{figure}

\subsection{Formalization of the learning problem} \label{sec:formalizing}

Let $\G$ be the hypothesis set of the encoder $\B$, such that $\G = \{g: \Y 
\to \Z\}$, where $\Y$ is some countable space and $\Z$ is some arbitrary 
feature space. In practice, $\Z$ is typically a Hilbert space. $\H$ will denote 
the hypothesis set for the decoder $\A$, where $\H = \{h: \Z \to \Y \}$.
As outlined above, we wish to find an algorithm $\A$ that can help another 
algorithm $\B$ minimize its loss. Specifically, $\B$ has a sequence of 
intentions $(y_1, ..., y_T)$ where $y_t \in \Y$. $\B$ encodes some control $z_t 
= g_t(\hat{y}_{t-1}), z_t \in \Z$ that it must pass through $\A$. $\A$ then 
selects a decoder $h_t \in \H$ and produces an output $\hat{y} = h_t  \circ 
g_t(\hat{y}_{t-1})$, where $h \in \H$.

In the simplest case, $\A$ is a fixed decoder and all of the learning is done 
by $\B$, which must select a mapping $g_t \in \G$ at each time point $t$. 
Therefore, we wish to evaluate our co-adaptive algorithm with respect to a fixed 
decoder $\tilde{h} \in \H$. Thus, we wish to evaluate our model in terms of 
regret. Specifically, $\A$ should choose a hypothesis $h_t \in \H$ that 
minimizes 
its regret with respect to a fixed decoder in hindsight:

$$
R_T =  \sum_{t=1}^T L_\B(h_t \circ g_t( \hat{y}_{t-1}), y_t) - \min_{g^\prime 
\in \G} \sum_{t=1}^T L_\B(\tilde{h} \circ g_t^\prime (\tilde{y}_{t-1}), y_t)
$$

where $L_\B$ is the loss function of $\B$ and $\tilde{y}_t$ is the predicted 
value of $y_t$ when using a fixed decoder. Thus, co-adaptive learning is only 
better than a fixed decoder when $\sum_{t=1}^T L_\B(h_t \circ g_t( 
\hat{y}_{t-1}, y_t) <  \min_{g \in \G} \sum_{t=1}^T L_\B(\tilde{h} \circ 
g_t^\prime 
(\hat{y}_{t-1}), y_t)$ and so $R_T < 0$.

In Section \ref{sec:theory} we will 
show that, under some mild assumptions, we can prove that coadaptation 
outperforms a fixed decoder with high probability as long as a particular 
condition is met.

\subsection{Assumptions} \label{sec:assumptions}

Our analysis will rely on a couple of mild assumptions:

\begin{enumerate}
	\item $\Y$ is a countable space.
	\item $L_\B$ is Lipschitz continuous with Lipschitz constant $\ell_\B > 
0$ with respect to the Hamming metric (explained below).
\end{enumerate}

Assumption 1 is a straightforward assumption and applies to many problems, 
including search and the BCI scenario (assuming finite output). The second 
assumption is necessary for deriving our main theorem. We will 
discuss this assumption more in the next section. 

\subsection{Preliminaries} \label{sec:prelim}

We will need the following theorem in order to prove the main theorem of this 
paper. For ease of notation, let $\psi(y) = \sum_{t=1}^T L_\B(\tilde{h} \circ 
g_t(\tilde{y}_{t-1}), y_t)$. Furthermore, set 

$$
\bar{\eta}_{ij} \equiv \sup_{\substack{y^{i-1} \in \Y^{i-1}, w, \hat{w} \in \Y 
\\ \Pr[y^i = (y^{i-1},w)] > 0, \Pr[y^i = (y^{i-1},\hat{w})]>0}} \| \L(Y_j^n | 
Y^i = (y^{i-1},w)) - \L(Y_j^n|Y^i=(y^{i-1},\hat{w}))\|_{\mathrm{TV}},
$$

where $\| A - B \|_{\mathrm{TV}}$ is the total variation distance between 
probability distributions $A$ and $B$ and where $\L(Z|Y=y)$ is the conditional 
distribution of $Z$ 
given $Y=y$. We will then define $M_T$ to be the largest sum of variational 
distances over all sub-sequences in $\Y^T$:

$$
M_T \equiv \max_{1 \leq t \leq T} ( 1 + \bar{\eta}_{t, t+1} + \cdots + 
\bar{\eta}_{t,T} )
$$

We next define the Hamming metric $d_H(x, y) = \sum_{t=1}^T 1_{x_t \neq y_t}$, 
where $1_{x_t \neq y_t}$ is the indicator function. Thus, the Hamming metric is 
a count of all points that differ between two sequences. In our scenario, the 
Hamming metric is used between sequences $y$ and $\tilde{y}$. We only need to 
evaluate the sequences with respect to $L_\B$, where 
$L_\B$ is $\ell_\B$-Lipschitz with respect to the Hamming metric. Specifically, 
we require that $| \psi(S) - \psi(S^\prime) | \leq \ell_\B d_H(S, 
S^\prime)$, where $S$ and $S^\prime$ are two sequences that differ by a single 
point.

With these assumptions, we can give Theorem \ref{thm:kontorovich} from \cite{kontorovich2008}.

\begin{theorem}[\cite{kontorovich2008} 1.1]
\label{thm:kontorovich}
Without loss of generality, let $y = (y_1, ... y_T) \in \Y^T$ be a sequence of 
random variables, and $\psi : \Y^T \to \R$ is a $\ell_\B$-Lipschitz continuous 
function with respect to $d_H(\cdot, \cdot)$. Then, for any $\epsilon > 0$:

$$
\Pr[ | \psi - \E[\psi] | > \epsilon ] \leq 2 \exp \bigg( \frac{-\epsilon^2}{2 
T \ell_\B^2 M_T^2} \bigg)
$$
\end{theorem}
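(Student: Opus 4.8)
The plan is to prove the bound by the martingale method, extending the bounded-differences (McDiarmid) approach to the dependent setting and using the mixing coefficients $\bar{\eta}_{ij}$ to control the interaction between coordinates. First I would form the Doob martingale of $\psi$ along the natural filtration: writing $g_i(y^i) = \E[\psi \mid Y^i = y^i]$ and adopting the convention $g_0 = \E\psi$, set $V_i = g_i(Y^i) - g_{i-1}(Y^{i-1})$ for $1 \le i \le T$. Then $\psi - \E\psi = \sum_{i=1}^T V_i$, and $(V_i)$ is a martingale difference sequence with respect to the filtration generated by $Y_1, \ldots, Y_i$. Because $g_{i-1}(Y^{i-1}) = \E[g_i(Y^i) \mid Y^{i-1}]$ is a conditional average of the values $g_i(Y^{i-1}, \cdot)$, the increment $V_i$ lies between the infimum and supremum of $g_i$ over its final argument, so $|V_i|$ is bounded by the oscillation $\sup_{w, \hat{w}} | g_i(Y^{i-1}, w) - g_i(Y^{i-1}, \hat{w}) |$. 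The whole argument therefore reduces to a uniform bound on this oscillation, followed by Azuma--Hoeffding.

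The heart of the proof is bounding this oscillation, and this is the step I expect to be the main obstacle. I would compare the two conditional tail laws $\mu = \L(Y_{i+1}^T \mid Y^i = (y^{i-1}, w))$ and $\nu = \L(Y_{i+1}^T \mid Y^i = (y^{i-1}, \hat{w}))$ by constructing a coupling $(U, \hat{U})$ of $\mu$ and $\nu$. Since $\psi$ is $\ell_\B$-Lipschitz with respect to $d_H$,
$$
| g_i(y^{i-1}, w) - g_i(y^{i-1}, \hat{w}) | \le \ell_\B \, \E\!\left[ 1_{w \ne \hat{w}} + \sum_{j=i+1}^T 1_{U_j \ne \hat{U}_j} \right] = \ell_\B \left( 1_{w \ne \hat{w}} + \sum_{j=i+1}^T \Pr[U_j \ne \hat{U}_j] \right),
$$
so it suffices to exhibit a coupling whose per-coordinate disagreement probabilities are controlled by the mixing coefficients, $\Pr[U_j \ne \hat{U}_j] \le \bar{\eta}_{ij}$. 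The difficulty is that a naive maximal coupling of the full tail laws only bounds the probability that the tails differ \emph{somewhere}, not the coordinate-wise count; a sequential, recursive coupling is needed in which the discrepancy introduced at each coordinate is absorbed into the appropriate $\bar{\eta}_{ij}$, and this must hold uniformly over the conditioning prefix $y^{i-1}$, which is exactly what the supremum in the definition of $\bar{\eta}_{ij}$ secures. Granting this, the oscillation of $g_i$ in its $i$-th argument is at most $\ell_\B ( 1 + \bar{\eta}_{i,i+1} + \cdots + \bar{\eta}_{i,T} ) \le \ell_\B M_T$, where the leading $1$ accounts for the flipped coordinate itself.

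With $|V_i| \le \ell_\B M_T$ holding almost surely and uniformly in $i$, I would close the argument using the Azuma--Hoeffding inequality for a martingale difference sequence with bounded increments: taking $c_i = \ell_\B M_T$ gives $\sum_{i=1}^T c_i^2 = T \ell_\B^2 M_T^2$, whence
$$
\Pr\bigl[\, | \psi - \E\psi | > \epsilon \,\bigr] \le 2 \exp\!\left( \frac{-\epsilon^2}{2 \sum_{i=1}^T c_i^2} \right) = 2 \exp\!\left( \frac{-\epsilon^2}{2 T \ell_\B^2 M_T^2} \right),
$$
which is the claimed estimate. As a consistency check, when the $Y_t$ are independent we have $\bar{\eta}_{ij} = 0$ for $j > i$, so $M_T = 1$ and the bound recovers the classical McDiarmid inequality; the mixing coefficients enter only through the factor $M_T$, which quantifies how far the sequence departs from independence.
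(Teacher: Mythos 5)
First, note that the paper does not actually prove this statement: Theorem \ref{thm:kontorovich} is imported verbatim from \cite{kontorovich2008}, so the relevant comparison is with the proof in that source. Your outline is essentially that proof: form the Doob martingale $V_i = \E[\psi \mid Y^i] - \E[\psi \mid Y^{i-1}]$, bound each increment by the oscillation of $\E[\psi \mid Y^{i-1}, \cdot\,]$ in its last coordinate, control that oscillation by $\ell_\B(1 + \bar{\eta}_{i,i+1} + \cdots + \bar{\eta}_{i,T}) \le \ell_\B M_T$, and finish with Azuma--Hoeffding. Your constants check out (with $|V_i| \le \ell_\B M_T$ one gets exactly $2\exp(-\epsilon^2 / (2T\ell_\B^2 M_T^2))$), the restriction of the supremum in $\bar{\eta}_{ij}$ to prefixes of positive probability is used where you use it, and your independence sanity check ($\bar{\eta}_{ij} = 0$, $M_T = 1$, recovering McDiarmid) is correct.

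The one substantive gap is the step you yourself flag and then grant: the existence of a single coupling $(U, \hat{U})$ of the two conditional tail laws satisfying $\Pr[U_j \ne \hat{U}_j] \le \bar{\eta}_{ij}$ for \emph{every} $j > i$ simultaneously. You are right that maximal coupling of the full tails fails, but be aware that the obvious sequential repair also fails as stated: if you couple coordinate by coordinate, maximally at each step given the realized histories, then after the first disagreement the two conditioning histories differ in more than one coordinate, and the resulting step-$j$ total variation distances are no longer dominated by $\bar{\eta}_{ij}$, whose definition only allows prefixes differing in the single coordinate $i$. The clean way to close this is Goldstein's maximal coupling theorem, which produces one coupling of the two tail processes that is simultaneously maximal for all tail $\sigma$-fields, i.e., $\Pr[(U_j, \ldots, U_T) \ne (\hat{U}_j, \ldots, \hat{U}_T)] = \| \L(Y_j^T \mid Y^i = (y^{i-1}, w)) - \L(Y_j^T \mid Y^i = (y^{i-1}, \hat{w})) \|_{\mathrm{TV}} \le \bar{\eta}_{ij}$ for each $j$ at once; the coordinate-wise bound then follows by monotonicity of the disagreement events. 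Interestingly, \cite{kontorovich2008} avoids explicit couplings altogether and instead bounds the martingale differences by an inductive total-variation (linear-programming duality) argument; either route fills the hole, but as written your proof rests on an unproved lemma at precisely its crux.
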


Theorem \ref{thm:kontorovich} is a version of McDiarmid's inequality over 
dependent random variables, which relaxes the i.i.d. assumption. We will 
use this theorem to give a probabilistic bound for the learning problem, which 
we will give and prove in the next section.

\section{Theoretical analysis of the co-adaptive learning problem} \label{sec:theory}

In this section we give the main theorem, which gives a condition that 
guarantees that co-adaptive learning will be better than learning with a fixed 
decoder with high probability. Recall that we will evaluate our algorithm with 
respect to regret:

$$
R_T =  \sum_{t=1}^T L_\B(h_t \circ g_t( \hat{y}_{t-1}), y_t) - \min_{g^\prime 
\in \G} \sum_{t=1}^T L_\B(\tilde{h} \circ g_t^\prime (\tilde{y}_{t-1}), y_t)
$$

Specifically, we can say that co-adaptive learning outperforms learning with a fixed decoder when $R_T <  0$. 

\begin{theorem}
\label{thm:main}
Given the assumptions from Section \ref{sec:assumptions} hold. Co-adaptation outperforms learning with a fixed decoder with probability at least $1 - \delta$ if the following holds:

$$
\sum_{t=1}^T L_\B (h_t \circ g_t(\hat{y}_{t-1}), y_t) + \ell_\B M_T \sqrt{2 T \log \frac{2}{\delta}} < \sum_{t=1}^T \varepsilon_t
$$

where $\varepsilon_t$ is the expected minimal loss at time $t$.

\end{theorem}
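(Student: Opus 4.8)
The plan is to treat the hindsight comparator—the loss of the best fixed decoder—as a single $\ell_\B$-Lipschitz function of the random intention sequence $y = (y_1,\dots,y_T)\in\Y^T$ and to control its fluctuation around its mean with Theorem~\ref{thm:kontorovich}. Concretely, I would set $\psi(y) = \min_{g'\in\G}\sum_{t=1}^T L_\B(\tilde h\circ g'_t(\tilde y_{t-1}), y_t)$, the minimized fixed-decoder loss that appears as the subtrahend in $R_T$, and identify its mean $\E[\psi] = \sum_{t=1}^T \varepsilon_t$ with the total expected minimal loss. By Assumption~2 together with the Lipschitz requirement recorded in Section~\ref{sec:prelim}, each per-$g'$ loss is $\ell_\B$-Lipschitz with respect to $d_H$; since a pointwise minimum of $\ell_\B$-Lipschitz functions is again $\ell_\B$-Lipschitz, $\psi$ satisfies $|\psi(S)-\psi(S')|\le \ell_\B\, d_H(S,S')$ for sequences differing in one coordinate, so Theorem~\ref{thm:kontorovich} applies to $\psi$.

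First I would calibrate the deviation parameter so that the tail bound equals $\delta$. Setting $2\exp(-\epsilon^2/(2T\ell_\B^2 M_T^2)) = \delta$ and solving for $\epsilon$ yields $\epsilon = \ell_\B M_T\sqrt{2T\log(2/\delta)}$, which is exactly the slack term in the statement. Theorem~\ref{thm:kontorovich} then gives $\Pr[\,|\psi-\E[\psi]| > \epsilon\,]\le\delta$, and I only need the one-sided conclusion: with probability at least $1-\delta$ we have $\psi \ge \E[\psi]-\epsilon$.

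Next I would combine this lower bound on the comparator with the hypothesis of the theorem. On the event $\psi \ge \E[\psi]-\epsilon = \sum_{t=1}^T\varepsilon_t - \ell_\B M_T\sqrt{2T\log(2/\delta)}$, the assumed inequality rearranges to $\sum_{t=1}^T L_\B(h_t\circ g_t(\hat y_{t-1}), y_t) < \sum_{t=1}^T\varepsilon_t - \epsilon \le \psi$. Because $\psi$ is precisely the comparator subtracted in $R_T$, this reads $\sum_{t=1}^T L_\B(h_t\circ g_t(\hat y_{t-1}), y_t) < \min_{g'\in\G}\sum_{t=1}^T L_\B(\tilde h\circ g'_t(\tilde y_{t-1}), y_t)$, i.e. $R_T < 0$, on an event of probability at least $1-\delta$. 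This is the claim.

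The concentration step is mechanical; the main obstacle will be justifying the two structural identifications it rests on. First, because the encoders $g_t$ and the reconstructed inputs $\tilde y_{t-1}$ depend on the past intentions, perturbing a single coordinate $y_t$ can propagate to many later terms, so I must argue that this closed-loop dependence is genuinely absorbed into the constant $\ell_\B$ (or captured through $M_T$ via the $\bar\eta_{ij}$) and does not secretly violate the single-coordinate Lipschitz bound. Second, I would need to pin down the meaning of $\sum_t\varepsilon_t$: the clean argument wants $\E[\psi]=\sum_t\varepsilon_t$, so I must either define $\varepsilon_t$ so that the expectation of the hindsight minimum decomposes additively over $t$, or phrase the bound directly in terms of $\E[\psi]$, taking care that the minimum over $g'$ and the expectation are not silently interchanged.
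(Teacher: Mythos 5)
Your concentration and rearrangement steps are fine, but the structural identification your whole argument rests on --- $\E[\psi] = \sum_{t=1}^T \varepsilon_t$ for $\psi = \min_{g'\in\G}\sum_{t=1}^T L_\B(\tilde h\circ g_t'(\tilde y_{t-1}), y_t)$ --- is a genuine gap, and it is exactly the interchange you flag as an obstacle in your last paragraph but then perform silently in your first. What your route needs is the one-sided bound $\E\big[\min_{g'}\sum_t L_t(g')\big] \geq \sum_t \varepsilon_t$, where $\varepsilon_t = \inf_{g_t'}\E\big[L_\B(\tilde h \circ g_t'(\tilde y_{t-1}), y_t)\,\big|\, Z\big]$ is a per-step \emph{conditional expected} minimal loss. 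But expectation and minimum interchange in the opposite direction: $\E\big[\min_{g'}\sum_t L_t(g')\big] \leq \inf_{g'}\E\big[\sum_t L_t(g')\big]$, because the hindsight minimizer may exploit the realized sequence. Concretely, if $\G$ is rich enough to fit any realized sequence exactly, then $\psi = 0$ almost surely and $\E[\psi]=0$, while the paper's closing discussion notes that $\varepsilon_t > 0$ whenever $\L(\tilde y_t = y_t \mid y_t, Z) < 1$; in that case your centering event $\psi \geq \sum_t \varepsilon_t - \epsilon$ fails, and the hypothesis of the theorem no longer implies $R_T < 0$ along your route.

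The paper's proof is organized precisely to avoid this. It applies Theorem \ref{thm:kontorovich} to $\psi(g')$ for \emph{fixed} $g'$, only afterwards takes $\inf_{g'}$ of both sides of the high-probability inequality, and then lower-bounds the resulting $\inf_{g'}\E[\psi(g')]$ --- not $\E[\inf_{g'}\psi(g')]$ --- via conditional independence and superadditivity of the infimum: $\inf_{g'}\sum_t \E[L_t\mid Z] \geq \sum_t \inf_{g_t'}\E[L_t\mid Z] = \sum_t \varepsilon_t$. So in the paper $\sum_t\varepsilon_t$ bounds a minimum of expectations, which is the correct direction. To be fair, your instinct to apply concentration directly to the minimum (a pointwise minimum of $\ell_\B$-Lipschitz functions is $\ell_\B$-Lipschitz) is in one respect tidier than the paper's: the paper's own step of taking $\inf_{g'}$ inside a per-$g'$ probability-$(1-\delta)$ event at a data-dependent minimizer would strictly require uniformity over $\G$, which concentrating on the minimum sidesteps. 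But that cleaner concentration step cannot be paired with $\sum_t\varepsilon_t$ as the mean: to repair your proof you would either have to lower-bound $\E[\min_{g'}\psi(g')]$ by $\sum_t\varepsilon_t$ (false in general) or follow the paper and keep the expectation outside the minimum.
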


\begin{proof}
We want a probabilistic bound for $\psi \equiv \sum_{t=1}^T L_\B(\tilde{h} \circ 
g_t^\prime (\tilde{y}_{t-1}), y_t)$. We can immediately derive the following 
bound from Theorem \ref{thm:kontorovich}.

$$
\Pr\Big[ \Big| \E[\psi] - \psi \Big| > \epsilon\Big] \leq 2 \exp \bigg( 
\frac{-\epsilon^2}{2 T \ell_\B^2 M_T^2} \bigg)
$$

Thus, with probability at least $1- \delta$, the following holds:

\begin{align*}
\Big| \E [\psi] - \psi \Big| &\leq \ell_\B M_T \sqrt{2 T \log 
\frac{2}{\delta}} \\
\Longrightarrow \psi &\geq \E[\psi] - \ell_\B M_T \sqrt{2 T \log 
\frac{2}{\delta}} 
\end{align*}

We then take the $\inf_{g^\prime \in \G} \psi$, where $g^\prime$ is a sequence 
of $g_t^\prime$.

\begin{align*}
\inf_{g^\prime \in \G} \psi &\geq \inf_{g^\prime \in \G} \E [ \psi ] - 
\ell_\B M_T \sqrt{2T \log \frac{2}{\delta}} \\
&= \inf_{g^\prime \in \G} \E \Bigg[ \sum_{t=1}^T L_\B (\tilde{h} \circ 
g^\prime_t(\tilde{y}_{t-1}), y_t)\Bigg] - \ell_\B M_T \sqrt{2 T \log 
\frac{2}{\delta}} \\
&= \inf_{g^\prime \in \G} \sum_{t=1}^T \E [ L_\B (\tilde{h} \circ 
g^\prime_t(\tilde{y}_{t-1}), y_t) | Z] - \ell_\B M_T \sqrt{2 T \log 
\frac{2}{\delta}} \\
&\geq \sum_{t=1}^T \inf_{g_t^\prime \in \G} \E [ L_\B(\tilde{h} \circ 
g_t^\prime(\tilde{y}_{t-1}),y_t) | Z] - \ell_ \B M_T \sqrt{2 T \log 
\frac{2}{\delta}} \\
&= \sum_{t=1}^T \inf_{g_t^\prime \in \G} \sum_{\tilde{y}_t \in \Y} \sum_{y \in 
\Y} L_\B(\tilde{y}_t,y_t)\L\big(\tilde{y}_t, y_t | Z \big) - 
\ell_ \B M_T \sqrt{2 T \log 
\frac{2}{\delta}} \\
&= \sum_{t=1}^T \varepsilon_t - \ell_\B M_T \sqrt{2 T \log \frac{2}{\delta}}
\end{align*}
where $Z = \Big(\{g_{t-i}^\prime \}_{i 
\in [1,t]}, \{y_{t-i}\}_{i \in [1,t]}, \{\tilde{y}_{t-i}\}_{i \in [1,t]}, 
\tilde{h}\Big)$, $\L(\tilde{y}_t,y_t|Z)$ is the joint distribution function of 
$(\tilde{y}_t,y_t)$ conditioned 
on $Z$, and $\{z_{t-i}\}_{i \in [1,t]}$ is the sequence of previous points for 
some variable $z$, which is all that $\B$ has to learn with and, consequently, 
use 
to select some $g_t^\prime \in \G$.

The second equality holds by the conditional independence of the loss. The 
second inequality holds by the super-additivity of the $\min$ function. The 
third equality is the definition of the expectation of the loss function, which 
is a function over the joint distribution of $\hat{y}$ and $y$, where $\hat{y}$ 
is determined by selecting some $g_t^\prime \in \G$. The 
final equality is the definition of $\varepsilon_t$. We can then use this lower 
bound on $\inf_{g^\prime \in \G} \psi$ to impose the probabilistic 
constraint above.
\end{proof}

Theorem \ref{thm:main} gives us a constraint that guarantees that the 
co-adaptive learner will out-perform learning with a fixed decoder. The bound 
depends on properties of the conditional probabilities for the different 
sequences and the loss function used. The benefit of this bound is that we can 
use the empirical loss, $\sum_{t=1}^T L_\B (h_t \circ g_t(\tilde{y}_{t-1}), 
y_t)$ 
to guarantee that co-adaptive learning is better than learning with a fixed 
decoder with high probability. While, $\sum_{t=1}^T \varepsilon_t$ is not 
accessible in general, there may be cases where we can guarantee that, in 
expectation, there is some infimal error. This is certainly the case when either 
$\tilde{h}$ or $\G$ is not rich enough to perform some task with zero error. 
Indeed, it is easy to see that for all $\tilde{h}$ and $\G$, there exists some 
set of conditional distributions $\L(\tilde{y}_t, y_t|Z), \forall \tilde{y}, y 
\in \Y^T$ where the infimal error is greater than zero. Indeed this occurs 
whenever $\L(\tilde{y}_t = y_t | y_t, Z) < 1$ or $\L(y_t|Z) < 1$, which holds 
by 
the definition of the conditional joint distribution:

$$
\L(\tilde{y}_t, y_t | Z) = \L(\tilde{y}_t|y_t, Z) \L(y_t|Z)
$$

\section{Conclusion}

We gave a condition for the co-adaptive learning problem that, if verified, 
guarantees that co-adaptive learning outperforms learning with a fixed decoder 
with high probability. This work marks a significant generalization of the 
co-adaptive learning problem. Our analysis is more general than one with 
the standard 
i.i.d. assumption. However, one drawback is that the derived constraint is not 
accessible in 
general: we do not always know the expected minimal loss as a function 
of time for $\B$ learning over a fixed decoder. However, in practice we may be 
able to  approximate $\varepsilon_t$ by measuring the learning rate and initial 
transition probability.

\bibliography{mybib}

\end{document}